\let\NAT@parse\undefined
\theoremstyle{plain}
\newtheorem{theorem}{Theorem}
\newtheorem{lemma}{Lemma}
\theoremstyle{definition}
\newtheorem{assumption}{Assumption}
\newtheorem{definition}{Definition}
\newtheorem{problem}{Problem}
\newtheorem{procedure2}{Procedure}
\theoremstyle{remark}
\newtheorem{remark}{Remark}
\title{ \LARGE \bf
  Efficient Coordination and Synchronization of Multi-Robot Systems Under Recurring Linear Temporal Logic
}
\author{Davide Peron$^1$, Victor Nan Fernandez-Ayala$^2$, Eleftherios E. Vlahakis$^2$, and Dimos V. Dimarogonas$^2$
\thanks{This work was supported by the ERC CoG LEAFHOUND, the EU
CANOPIES project, the Knut and Alice Wallenberg Foundation (KAW) and the Digital Futures Smart Construction project.}
\thanks{$^1$ Department of Information Engineering, University of Padova, 35122, Padova, Italy. Email: {\tt\small davide.peron.3@studenti.unipd.it} 
$^2$ Division of Decision
and Control Systems, School of Electrical Engineering and Computer
Science, KTH Royal Institute of Technology, 10044, Stockholm, Sweden. Emails: \tt\small\{vnfa,vlahakis,dimos\}@kth.se}
}
\begin{document}
\newcommand{\until}[0]{\mathsf{U}}

\def\triangleq{\mathrel{\ensurestackMath{\stackon[1pt]{=}{\scriptstyle\Delta}}}}

\maketitle
\thispagestyle{empty}
\pagestyle{empty}


\begin{abstract}
We consider multi-robot systems under recurring tasks formalized as linear temporal logic (LTL) specifications. To solve the planning problem efficiently, we propose a bottom-up approach combining offline plan synthesis with online coordination, dynamically adjusting plans via real-time communication. To address action delays, we introduce a synchronization mechanism ensuring coordinated task execution, leading to a multi-agent coordination and synchronization framework that is adaptable to a wide range of multi-robot applications. The software package is developed in Python and ROS2 for broad deployment. We validate our findings through lab experiments involving nine robots showing enhanced adaptability compared to previous methods. Additionally, we conduct simulations with up to ninety agents to demonstrate the reduced computational complexity and the scalability features of our work.
\end{abstract}


\section{Introduction} \label{sec:introduction}
Successful coordination and synchronization in multi-agent systems (MAS) is essential for completing complex, interdependent tasks that individual agents cannot handle alone,  making it one of the key challenges in robotic applications such as in logistics \cite{logistics} and precision agriculture \cite{agriculture}. As the number of agents and the complexity of tasks increase, ensuring scalability and conflict-free operation becomes more challenging, especially when tasks are expressed through temporal logic constraints. Therefore, finding an efficient and scalable solution is necessary to manage computational complexity and maintain robust performances. 
\par Linear Temporal Logic (LTL) is a logical formalism suitable for defining linear-time properties, widely used in formal verification, particularly in computer systems, and increasingly in robotics for specifying tasks in MAS \cite{ltl_planner_3}, \cite{multiagentltl1}. The introduction of Product B\"{u}chi Automata (PBA) has significantly advanced LTL compliance in MAS by systematically generating control strategies \cite{model-checking}. However, scalability remains a challenge as task complexity and the number of agents increase. Many recent approaches address this computational limitation by avoiding the direct construction of the PBA and relying on alternative control techniques, such as the sampling-based synthesis in \cite{large-scale, stylus} or reinforcement learning (RL) techniques in \cite{RL}. Although efficient, these techniques introduce a stochastic element to planning, whereas we aim to maintain the deterministic guarantees of graph search methods \cite{model-checking}. Other approaches focus on task assignment to individual agents to limit the exponential growth of the state space, rather than using centralized planning \cite{decentralized}. These methods often target specific robotic functions, such as motion \cite{motion2}, which hinders their applicability to more complex tasks, given the specificity of their context. Recent efforts, such as \cite{scratches}, introduce a task grammar that enables flexible, high-level task specification for heterogeneous agents, alleviating the need for explicit task assignments while maintaining correctness guarantees. Furthermore, synchronization mechanisms are becoming increasingly critical to ensure coordinated actions in MAS \cite{synchronization}.
\par In \cite{meng_paper} local tasks defined by syntactically cosafe LTL (sc-LTL) formulas \cite{scLTL} are considered, with agents navigating a grid-structured workspace and performing local, collaborative, and assistive actions. An offline planner generates initial plans for each agent, which are then adapted through a Request, Reply, and Confirmation cycle. The latter utilizes the PBA to incorporate assistive actions into the agents' plans. While robust, this approach is limited to finite-time tasks defined by sc-LTL formulas. In this work, we adapt \cite{meng_paper} by addressing its limitations and enhancing computational efficiency. Unlike the earlier study, which relied solely on simulations and did not fully capture edge cases, our work integrates both experiments and more exhaustive simulations. Specifically, we introduce a region of interest (ROI) based representation to reduce the size of the finite transition system (FTS) \cite{model-checking} (Sec. \ref{subsec:motion-ts}), and subsequently define a new subclass of LTL called recurring LTL to specify tasks that repeat infinitely often (Sec. \ref{subsec:task}). In Alg. \ref{alg:reply}  we eliminate the use of the PBA for plan adaptation in favor of the simpler FTS. Additionally, we reduce the complexity of selecting collaborative agents by using filtering Proc. \ref{proc:filtering} to warm-start the underlying mixed integer program (MIP) \cite{mip}. Lastly, we present a robust synchronization mechanism within the ROS2 \cite{ros2} framework that further enables effective collaboration among agents (Sec \ref{subsec:res-synchro}). Our approach is computationally efficient for real-world usage as proven with a team composed of nine commercially available robotic platforms (Sec. \ref{subsec:exp-results}) and suitable for large-scale robotic deployments as shown by a simulation with ninety robots (see Sec. \ref{subsec:exp-scalability}). The code is available at \cite{repo}.


\section{Preliminaries} \label{sec:preliminaries}

\subsection{LTL and B\"{u}chi Automaton} \label{subsec:LTL}
Atomic propositions are Boolean variables that can either be true or false. An LTL formula is defined over a set of atomic propositions ($\Psi$), the Boolean connectors: negation ($\neg$), conjunction ($\wedge$), and the temporal operators, \textit{next} ($\Circle$) and \textit{until} ($\until$). It is specified according to the following syntax \cite{model-checking}:
$ \varphi::= \top\mid a\mid\neg\varphi\mid\varphi_1\wedge\varphi_2\mid\Circle\varphi\mid\varphi_1\until\varphi_2$ where $a\in\Psi$, and $\top\triangleq True$. Operators \textit{always} ($\square$), \textit{eventually} ($\lozenge$) can be derived from the syntax above \cite[Ch. 5]{model-checking}. The satisfaction of an LTL formula $\varphi$ is achieved over words and the language of such words can be captured through a nondeterministic B\"{u}chi automaton (NBA) \cite{buchi_book}, defined as $\mathcal{B}=\left(S, S_0,  2^{\Psi}, \delta, \mathcal{F}\right)$, where $S$ is a finite set of states, $S_{0} \subseteq S$ is the set of initial states, $2^{\Psi}$ is the set of all alphabets, $\delta: S \times 2^{\Psi} \rightarrow 2^{S}$ is the transition function, $\mathcal{F} \subseteq S$ is the set of accepting states.

\subsection{LTL Planning}\label{subsec:planning}
We briefly highlight the key points of the initial planning (implemented offline), which was developed in \cite{ltl_planner}. The planner node takes as input an LTL
task $\varphi$, and an FTS $\mathcal{T}_w$. First, $\varphi$ is used to generate the NBA $\mathcal{B}_{\varphi}$ via the LTL2BA software \cite{LTL2BA}. Building upon the work of \cite{ltl_planner_2}, a PBA $\mathcal{A_P}$ \cite{ltl_planner} is built, then through model checking techniques \cite{ltl_planner_3} the optimal run that satisfies the given LTL task and the corresponding sequence of actions are found. For all the details we refer to \cite{ltl_planner}, \cite{ltl_planner_2}, \cite{ltl_planner_3}, \cite{meng_paper}.


\section{System Setup} \label{sec:problem}
Let a group of heterogeneous agents $\mathcal{N} = \{a_i \mid i=1,2,...,N\}$ operate within a partially known workspace. Each agent can execute primitive actions that may require assistance from others. The agents are connected via a shared network. Within this framework, agents can directly exchange messages with any other agent in the workspace. 
\subsection{System description}
\subsubsection{Motion Transition System} \label{subsec:motion-ts}
Agent $a_i$'s motion within the workspace is modeled as an FTS. Our approach focuses on a set of ROIs while a low-level controller handles obstacle avoidance and inter-region movement. This approach significantly reduces computational complexity compared to a fully partitioned workspace but sacrifices the ability to track, at any time, the agent's exact location within the FTS. Each agent $a_i$ is aware only of the set of $M^{a_i}$ ROIs, denoted by $\Pi^{a_i}_{\mathcal{M}}=\{\pi^{a_i}_1,\pi^{a_i}_2,...,\pi^{a_i}_{M^{a_i}}\}$. The FTS assigned agent $a_i$ is:
\begin{equation} \label{eq:motion-fts} \mathcal{T}^{a_i}_{\mathcal{M}}\triangleq\left(\Pi^{a_i}_{\mathcal{M}}, \Pi^{a_i}_{\mathcal{M},0}, \Psi^{a_i}_{\mathcal{M}}, \Sigma^{a_i}_{\mathcal{M}}, \longrightarrow^{a_i}_{\mathcal{M}}, \mathrm{L}^{a_i}_{\mathcal{M}}, \mathrm{T}^{a_i}_{\mathcal{M}}\right),  
\end{equation}
where $\Pi^{a_i}_{\mathcal{M},0}\in\Pi^{a_i}_{\mathcal{M}}$ is the initial ROI, $\Psi^{a_i}_{\mathcal{M}}$ is the set of atomic propositions describing the properties of the workspace, $\Sigma^{a_i}_{\mathcal{M}}$ is the set of movement actions, $\longrightarrow^{a_i}_{\mathcal{M}}\subseteq \Pi^{a_i}_{\mathcal{M}}\times\Sigma^{a_i}_{\mathcal{M}}\times\Pi^{a_i}_{\mathcal{M}}$ is the transition relation, $\mathrm{L}^{a_i}_{\mathcal{M}}:\Pi^{a_i}\rightarrow2^{\Psi^{a_i}_{\mathcal{M}}}$ is the labeling function, indicating the properties held by each ROI, and $\mathrm{T} ^{a_i}_{\mathcal{M}}:\longrightarrow^{a_i}_{\mathcal{M}}\rightarrow\mathbb{R}^+$ is the transition time function, representing the estimated time necessary for each transition.

\subsubsection{Action Model}\label{subsec:action-model}
In addition to its movement actions, agent $a_i$ can perform actions  $\Sigma^{a_i}_{\mathscr{A}} \triangleq \Sigma^{a_i}_l \cup \Sigma^{a_i}_c \cup \Sigma^{a_i}_h$, where $\Sigma^{a_i}_l$ are \textit{local} actions performed independently, $\Sigma^{a_i}_c$ are \textit{collaborative} actions requiring assistance from other agents, and $\Sigma^{a_i}_h$ are \textit{assisting} actions carried out to help others. Lastly, $\sigma_0 = \mathit{None}\in \Sigma^{a_i}_l$ indicates that $a_i$ remains idle. The action model for agent $a_i$ is defined as the tuple:
\begin{equation}\label{eq:action-model}
\mathscr{A}^{a_i} \triangleq \left(\Sigma^{a_i}_{\mathscr{A}}, \Psi^{a_i}_{\mathscr{A}}, \mathrm{L}^{a_i}_{\mathscr{A}}, \mathrm{Cond}^{a_i}, \mathrm{Dura}^{a_i}, \mathrm{Depd}^{a_i}\right),
\end{equation}
where $\Psi^{a_i}_{\mathscr{A}}$ is the set of atomic propositions, $\mathrm{L}^{a_i}_{\mathscr{A}}$ is the labeling function as in \cite{meng_paper}, $\mathrm{Cond}^{a_i}$ is the region properties required to execute an action, $\mathrm{Dura}^{a_i}$ is the action duration, with $\mathrm{Dura}^{a_i}(\sigma_s) = T_s > 0$, and 
$\mathrm{Depd}^{a_i}: \Sigma^{a_i}_{\mathscr{A}} \rightarrow 2^{\Sigma^{\sim a_i}_h} \times 2^{\Pi^{\mathcal{N}}}$ denotes the dependence function, where  $\Sigma^{\sim a_i}_h$ is the set of \textit{external} assisting actions that agent $a_i$ depends on, and $\Pi^{\mathcal{N}}=\cup_{a_i\in\mathcal{N}}\Pi^{a_i}_{\mathcal{M}}$. Given $\sigma_c\in \Sigma^{a_i}_c $, we define the set of actions involved in a collaboration as:
\begin{equation}\label{eq:collaboration}
    \mathcal{C}(\sigma_c)=\{\sigma_c\}\cup\mathrm{Depd}^{a_i}(\sigma_c).
\end{equation}
\begin{definition}\label{def:succesful-collab}
    A collaboration is considered successful if all actions involved are synchronized; i.e. to complete $\sigma_c \in \Sigma_c^{a_i}$, it is necessary that all actions in $\mathcal{C}(\sigma_c)$ start simultaneously.
\end{definition}
\subsubsection{Agent Transition System} \label{subsec:agent-ts}

The planner in Sec. \ref{subsec:planning} requires to define agent $a_i$'s FTS by combining \eqref{eq:motion-fts} and \eqref{eq:action-model}.
\begin{definition}
 Given $\mathcal{T}^{a_i}_{\mathcal{M}}$ and $\mathscr{A}^{a_i}$, a valid FTS for agent $a_i$, according to \cite{model-checking}, can be constructed as follows:
    \begin{equation}\label{eq:agent-model}    \mathcal{T}^{a_i}_{\mathcal{G}}\triangleq\left(\Pi^{a_i}_{\mathcal{G}}, \Pi^{a_i}_{\mathcal{G},0}, \Psi^{a_i}_{\mathcal{G}}, \Sigma^{a_i}_{\mathcal{G}}, \longrightarrow^{a_i}_{\mathcal{G}}, \mathrm{L}^{a_i}_{\mathcal{G}}, \mathrm{T}^{a_i}_{\mathcal{G}}\right), 
    \end{equation}
where $\Pi^{a_i}_{\mathcal{G}} = \Pi^{a_i}_{\mathcal{M}}\times \Sigma^{a_i}_{\mathscr{A}}$ is the set states,
$\Pi^{a_i}_{\mathcal{G},0}=\langle\Pi^{a_i}_{\mathcal{M},0} , \mathit{None}\rangle$ is the initial state,
$\Psi^{a_i}_{\mathcal{G}}$ is the set of atomic propositions,
$\Sigma^{a_i}_{\mathcal{G}}=\Sigma^{a_i}_{\mathcal{M}}\bigcup\Sigma^{a_i}_{\mathscr{A}}$, with $\Sigma^{a_i}_{\mathcal{G}, l}=\Sigma^{a_i}_{\mathcal{M}}\bigcup\Sigma^{a_i}_l$, 
$\longrightarrow^{a_i}_{\mathcal{G}}$ is the transition relation,
$\mathrm{L}^{a_i}_{\mathcal{G}}$ is the labeling function, and 
$\mathrm{T}^{a_i}_{\mathcal{G}}$ is the transition estimated duration \cite{meng_paper}.
\end{definition}
As in \cite{meng_paper},  the path is denoted by  $\tau^{a_{i}}=\pi^{a_i}_{\mathcal{G}, 0} \pi^{a_i}_{\mathcal{G}, 1} \ldots$ its trace by $\mathit{trace}(\tau^{a_{i}})=L_{\mathcal{G}}^{a_{i}}(\pi^{a_i}_{\mathcal{G}, 0}) L_{\mathcal{G}}^{a_{i}}(\pi^{a_i}_{\mathcal{G}, 1}) \ldots$ and, the associated sequence of actions by $\rho^{a_i}=\sigma^{a_i}_0, \sigma^{a_i}_1,\ldots$, i.e., the actions that allow transition between the states of $\tau^{a_{i}}$. 

\subsubsection{Task Specification}\label{subsec:task}
Our focus is on implementing recurring tasks i.e., tasks that repeat infinitely often. We will consider the following syntax $\varphi' ::=\top \mid a \mid \neg a\mid \varphi'_1\wedge\varphi'_2 \mid \lozenge\varphi'$, and for agent $a_i$ we define the recurring task as
\begin{equation}\label{eq:recurringLTL}
\varphi^{a_i}_r=\varphi'_1\wedge\square\lozenge\varphi'_2.
\end{equation}
Note that $\varphi'_2$ cannot start with $\lozenge$ to guarantee the validity of the LTL formula.
Given any satisfying word of $\varphi^{a_i}_r$, inserting a detour i.e., a finite sequence of states, between two consecutive states results in a satisfying word.

\begin{problem}\label{problem:task}
Given $\mathcal{T}_{\mathcal{G}}^{a_i}$ and the locally assigned task $\varphi^{a_i}_r$, design a distributed coordination and synchronization scheme such that $\varphi^{a_i}_r$ is satisfied for all $a_i \in \mathcal{N}$.
\end{problem}


\section{Main Results}\label{sec:results}
Our approach consists of an offline initial planning and an online adaptation phase. The latter involves agents exchanging request, reply, and confirmation messages to identify the best candidates for assisting with collaborative actions.
\subsection{Offline Initial Planning}\label{subsec:res-planning}
Given the FTS of an agent $\mathcal{T}_{\mathcal{G}}^{a_{i}}$ and the locally assigned task $\varphi^{a_{i}}_r$, the offline planner (see Sec. \ref{subsec:planning}) outputs the optimal initial plan for the agent, $\tau^{a_i}_{init}$, namely a sequence of states that satisfy $\varphi^{a_i}_r$, and the associated sequence of actions $\rho^{a_i}_{init}$. The resulting plan guarantees that $\mathit{trace}\left(\tau^{a_{i}}_{init}\right) \models \varphi^{a_{i}}_r$ where the satisfaction relation is defined in Sec. \ref{subsec:LTL}.

\subsection{Collaboration Request}\label{subsec:res-request}
 Given a collaborative action $\sigma^{a_i}_c\in\Sigma^{a_i}_c$, the request message sent by agent $a_i$ to all the other agents is
    \begin{equation*}
    \mathbf{Req}^{a_i}=\{(\sigma_d, \pi_{d}, T^{a_i}_c) \forall \sigma_d\in\mathrm{Depd}^{a_i}(\sigma^{a_i}_c)\}.
    \end{equation*}
Here, $\sigma_d$ are the assistive actions required to complete $\sigma^{a_i}_c$, $\pi_{d}$ are the regions where $\sigma_d$ takes place, $T^{a_i}_c$ is the time required before $\sigma^{a_i}_c$ starts according to $a_i$'s plan.
\begin{algorithm2e}[t]
\DontPrintSemicolon
\SetKwFunction{ChooseROI}{Choose\_ROI}
\SetKwInOut{Input}{Input}
\SetKwInOut{Output}{Output}
\ResetInOut{Output}

\Input{$l$, $\rho^{a_i}$, $H^{a_i}$, $T_{rem}$}
\Output{$\mathbf{Req}^{a_i}$}
\BlankLine
$s=1$, $T^{a_i}_c=T_{rem}$\;
\While{$T^{a_i}_c<H^{a_i}$}
{   
    \If{$\rho^{a_i}[l+s]\in\Sigma^{a_i}_c$}
    {
        \ForAll{$\sigma_d\in\mathrm{Depd}^{a_i}(\rho^{a_i}[l+s])$}
        {
            $\pi_d=$\ChooseROI{}\;
            add $(\sigma_d,\ \pi_d, T^{a_i}_c)$ to $\mathbf{Req}^{a_i}$\;            
        } 
        \Return{$\mathbf{Req}^{a_i} $}
    }
    $T^{a_i}_c = T^{a_i}_c + T^{a_i}_{\mathcal{G}}(\rho^{a_i}[l+s])$, $s=s+1$\;
}
\Return{$\emptyset$}
\caption{Check in horizon and Request}\label{alg:request}
\end{algorithm2e}
Algorithm \ref{alg:request} generates $\mathbf{Req}^{a_i}$ by assuming a generic time instant where agent $a_i$ is in state $\pi^{a_i}_{\mathcal{G}, l}$, the $l$-th element of its plan $\tau^{a_i}$ (possibly different from $\tau^{a_i}_{init}$ due to modification from the application of our approach). The agent executes action $\sigma^{a_i}_l$, corresponding to $\rho^{a_i}[l]$. The algorithm inputs the current state and action index ($l$), the action sequence $\rho^{a_i}$, the time remaining for the current action ($T_{rem}$), and the horizon length ($H^{a_i}$), considered to be of a similar order of magnitude of the actions in the experimental scenario. The algorithm checks future actions within the horizon for any collaborative tasks. If found, it creates a request, otherwise, it returns an empty set, sending no message.  
The \ChooseROI function selects an ROI based on the specific experimental setup. As a result, we cannot provide a general implementation; however, the specific one used in our experiment is detailed in Sec. \ref{subsec:exp-system}. Lastly, at the end of each iteration, we update $T^{a_i}_c$ by adding the duration of the last action.
\subsection{Reply}\label{subsec:res-reply}
Given a request for collaboration, $\mathbf{Req}^{a_i}$, the reply message sent by agent $a_j$ to $a_i$ is
\begin{equation*}
        \mathbf{Reply}^{a_j}=\{(\sigma_d, \pi_d, b^{a_j}_{d}, t^{a_j}_{d}) \forall (\sigma_d, \pi_d, T^{a_i}_c) \in \mathbf{Req}^{a_i}\}.        
\end{equation*}
Here, $b^{a_j}_{d}$ is a Boolean variable indicating that agent $a_j$ can execute $\sigma_d$ at region $ \pi_d$ and at time $t^{a_j}_{d}$.

\begin{algorithm2e}[t]
\DontPrintSemicolon
\SetKwFunction{Algorithmt}{Alogithm3}
\SetKwFunction{Dijkstra}{Dijkstra}
\SetKwInOut{Input}{Input}
\SetKwInOut{Output}{Output}
\ResetInOut{Output}

\Input{$\mathbf{Req}^{a_i}$, $m$, $\tau^{a_j}$, $\mathcal{T}_{\mathcal{G}}^{a_{j}}$, $\overline{T}^{a_j}$, $T_{rem}$}
\Output{$\mathbf{Reply}^{a_j}$, $D^{a_j}$}
\BlankLine
\ForAll{$(\sigma_d, \pi_d, T^{a_i}_c)\in \mathbf{Req}^{a_i}$}
{
    \eIf{$\overline{T}^{a_j}$ and $\langle \pi_d , \sigma_d\rangle \in \Pi^{a_j}_{\mathcal{G}}$}{
        $\pi^{a_j}_{\mathcal{G}, init}=\tau^{a_j}[m+1]$, \quad $\pi^{a_j}_{\mathcal{G}, fin}=\tau^{a_j}[m+2]$\;
        $\pi^{a_j}_{\mathcal{G}, targ}=\langle\sigma_d, \pi_d\rangle$\; 

        $(D_1, C_1)$=\Dijkstra{$\mathcal{T}_{\mathcal{G}}^{a_{j}}$, $\pi^{a_j}_{\mathcal{G}, init}$, $\pi^{a_j}_{\mathcal{G}, targ}$ }\;
        $(D_2, C_2)$=\Dijkstra{$\mathcal{T}_{\mathcal{G}}^{a_{j}}$, $\pi^{a_j}_{\mathcal{G}, targ}$, $\pi^{a_j}_{\mathcal{G}, fin}$ }\;
        $D^{a_j}(\sigma_d)=D_1+D_2[1:end]$\;
        $t^{a_j}_{d}=\sum_{i=0}^{len(C_1)-2} C_1[i]$\;
        add $(\sigma_d, \pi_d, \top ,  T_{rem}+t^{a_j}_{d})$ to $\mathbf{Reply}^{a_j}$\;
    }{
      add $(\sigma_d, \pi_d, \bot, K)$ to $\mathbf{Reply}^{a_j}$\;
    }
}
\Return{$D^{a_j},\ \mathbf{Reply}^{a_j}$}
\caption{Reply of  $a_j$ to a request from  $a_i$}\label{alg:reply}
\end{algorithm2e}
Algorithm \ref{alg:reply} generates $\mathbf{Reply}^{a_j}$ by assuming a request arrives when agent $a_j$ is in state $\pi^{a_j}_{\mathcal{G}, m}$, the $m$-th element of its plan $\tau^{a_j}$, currently executing action $\sigma^{a_j}_m$. Alg. \ref{alg:reply} takes as inputs the request from $a_i$, the current state and action index ($m$), the plan $\tau^{a_j}$, the FTS $\mathcal{T}_{\mathcal{G}}^{a_j}$, the availability indicator $\overline{T}^{a_j}$ ($\top$ if available, $\bot$ if collaborating), and the time remaining for the current action $T_{rem} = \max(T^{a_j}_{\mathcal{G}}(\sigma^{a_j}_m) - \Delta t, 0)$ with $\Delta t$ the elapsed time from the start of $\sigma^{a_j}_m$. It outputs $\mathbf{Reply}^{a_j}$ and a detour dictionary $D^{a_j}$, which includes possible path detours, e.g., $D^{a_j}(\sigma_d)$ represents a detour that includes the state $\langle\pi_d, \sigma_d\rangle$. The algorithm checks if $a_j$ can assist with any requested actions, if so, from lines 3 to 7, it builds the detour between initial ($\pi^{a_j}_{\mathcal{G}, init}$) and final ($\pi^{a_j}_{\mathcal{G}, fin}$) states, passing through a target ($\pi^{a_j}_{\mathcal{G}, targ}$) state. It uses a modified \Dijkstra algorithm that returns the shortest path $D$ and an array $C$ with the associated action costs. The first call yields $D_1$ and $C_1$ i.e., the path and cost from the initial to the target state, similarly the second call produces $D_2$ and $C_2$ from the target to the final state. Then it merges the two paths into the detour saving it in the dictionary $D^{a_j}$. Lastly, it calculates the time $t^{a_j}_{d}$ for $a_j$ to start the assistive action, excluding the current action's completion time. Note that the summation avoids adding the cost of $\sigma_d$, stopping at $len(C_1)-2$. If a state is not feasible or $a_j$ is occupied, it sets $t^{a_j}_d = K$, where $K \gg T^{a_i}_c$; in practice, we set $K=10 T^{a_i}_c$, but any finite value would work. 
\begin{remark} \label{rem:delays}
    Note that $t^{a_j}_d$, $T^{a_i}_c$ are nominal times, but an agent might be affected by finite delays due to actions taking longer than expected in experiments.
\end{remark}
The definition of $T_{rem} \geq 0$ ensures nonnegative times from action delays. Note $a_i$ replies negatively to its request.
\subsection{Confirmation}\label{subsec:res-confirm}
 Given $\mathbf{Req}^{a_i}$, the confirmation message sent by  $a_i$ to $a_j$, based on $\mathbf{Reply}^{a_j} \quad \forall a_j \in \mathcal{N}$, has the following structure:
\begin{equation*}
          \mathbf{Conf}^{a_i}_{a_j}=\{(\sigma_d, \pi_d, c^{a_j}_{d},  T^{a_j}_d)\ \forall (\sigma_d, \pi_d, T^{a_i}_c)\in \mathbf{Req}^{a_i}\}. 
\end{equation*}
Here, $c^{a_j}_{d}$ is a Boolean variable indicating whether $a_j$ is confirmed to perform $\sigma_d$ at ROI $\pi_d$, and $T^{a_j}_d$ is the estimated start time for the collaboration. If $a_j$ is not selected, $T^{a_j}_d < 0$.
To build the confirmation messages, the Boolean variables $\{c^{a_j}_{d},\ a_j \in \mathcal{N}\}$ must satisfy two constraints: 1) Each agent in $\mathcal{N}$ can be confirmed for at most one $(\sigma_d, \pi_d) \in \mathbf{Req}^{a_i}$; 2) Exactly one agent must be confirmed for each action $(\sigma_d, \pi_d)$. Moreover, agents are selected based on their ability to perform the assisting action $\sigma_d$ as close as possible to the target time $T^{a_i}_c$.
Given $ |\mathbf{Req}^{a_i}| = M $, denote the assisting actions as $\{\sigma_d \mid d=1,\ldots, M\}$. The problem of finding $\{c^{a_j}_d\}$ can be formulated as a MIP solved for $\mathcal{R}=\mathcal{N}$
\begin{subequations}\label{eq:MIP}
    \begin{align}
        \min\limits_{\{c^{a_j}_d , a_j\in\mathcal{R}\}_{d=1}^M} &\quad \sum^{M}_{d=1}\sum^{|\mathcal{R}|}_{j=1}c^{a_j}_{d} | t^{a_j}_d-T^{a_i}_c| \label{eq:mip-cost}\\
        \text{s.t.}\ &\quad \sum^{M}_{d=1} b^{a_j}_{d}  c^{a_j}_{d}\leq1\ \ \forall a_j\in  \mathcal{R}\label{eq:mip-constr1}\\
         &\quad \sum^{ |\mathcal{R}|}_{j=1} b^{a_j}_{d} c^{a_j}_{d}=1\ \ \forall d\in \{1,...,M\}.  \label{eq:mip-constr2} 
    \end{align}
\end{subequations}
Once solved, if $c^{a_j}_d = \top$, $T^{a_j}_d = t^{a_j}_d$ otherwise, $T^{a_j}_d = -1$.
To handle the complexity of \eqref{eq:MIP}, which grows exponentially with the number of binary variables involved, we propose a filtering procedure to reduce the number of agents involved while maintaining optimality.
\begin{procedure2}[\textit{Filtering Procedure}]\label{proc:filtering}
    1) Define $\mathcal{N}_U\subseteq\mathcal{N}$ as the set of agents who can assist in at least one action i.e., $a_j \in \mathcal{N}_U$ if $b^{a_j}_d = \top$ for any $(\sigma_d, \pi_d, b^{a_j}_d, t^{a_j}_d) \in \mathbf{Reply}^{a_j}_{a_i}$.\\
    2) For each $\sigma_d \in \mathbf{Req}^{a_i}$, sort agents in ascending order of $\Delta^{a_j}_{d} = |t^{a_j}_d - T_c^{a_i}|$ and store the first $M$ for each $\sigma_d$ in $\mathcal{N}_F$.
\end{procedure2}
\begin{theorem}\label{thm:MIP}
    Consider the set of agents $\mathcal{N}$ and the $M$ actions in $\mathbf{Req}^{a_i}$. Let $\mathcal{N}_F$ be the set of filtered agents given by Proc. \ref{proc:filtering}. Assume that the MIP in \eqref{eq:MIP} for $\mathcal{R}=\mathcal{N}$ is feasible. Then, the MIP in \eqref{eq:MIP} for $\mathcal{R}=\mathcal{N}$ and for $\mathcal{R}=\mathcal{N}_F$ have identical optimal solutions.
\end{theorem}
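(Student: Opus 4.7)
The plan is to prove the two programs have the same optimal value by a sandwich argument combined with a constructive exchange. One direction is immediate: since $\mathcal{N}_F \subseteq \mathcal{N}$, any assignment feasible for $\mathcal{R}=\mathcal{N}_F$ (extended by setting $c^{a_j}_d = 0$ for $a_j \in \mathcal{N} \setminus \mathcal{N}_F$) is also feasible for $\mathcal{R}=\mathcal{N}$ with the same objective, so the optimum over $\mathcal{N}$ is no larger than the optimum over $\mathcal{N}_F$.

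For the reverse inequality, I would take any optimal assignment $c^{\star}$ for $\mathcal{R}=\mathcal{N}$ and rewrite it as an equally good assignment supported on $\mathcal{N}_F$ via a finite sequence of swaps. Fix an action $\sigma_d$ and let $\mathcal{F}_d \subseteq \mathcal{N}_F$ denote the $M$ agents selected for $\sigma_d$ in Proc.~\ref{proc:filtering}. Suppose $c^{\star}$ assigns some $a_j \in \mathcal{N} \setminus \mathcal{N}_F$ to $\sigma_d$. Then $b^{a_j}_d = \top$ by feasibility of \eqref{eq:mip-constr2}, and in particular $\Delta^{a_j}_d < K - T^{a_i}_c$, so by the sorting rule of Proc.~\ref{proc:filtering} every $a_k \in \mathcal{F}_d$ satisfies $\Delta^{a_k}_d \leq \Delta^{a_j}_d$ and $b^{a_k}_d = \top$.

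The counting step is the crux. Among the $M$ agents of $\mathcal{F}_d$, at most $M-1$ can be assigned (under $c^{\star}$) to actions other than $\sigma_d$, because there are only $M-1$ such other actions and each admits exactly one confirmed agent by \eqref{eq:mip-constr2}. Hence some $a_k \in \mathcal{F}_d$ is idle in $c^{\star}$. The swap $c^{a_j}_d \leftarrow 0$, $c^{a_k}_d \leftarrow 1$ preserves \eqref{eq:mip-constr1} (since $a_k$ was idle) and \eqref{eq:mip-constr2} (since $b^{a_k}_d = \top$), and it changes the cost by $\Delta^{a_k}_d - \Delta^{a_j}_d \le 0$. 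Iterating removes every assignment outside $\mathcal{N}_F$ in finitely many steps, producing a feasible solution for $\mathcal{R}=\mathcal{N}_F$ with objective no larger than that of $c^{\star}$. Combined with the first inequality, the two optima coincide, and the swap explicitly exhibits a common optimizer.

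The main obstacle is ensuring the swap is always well defined, and this is precisely the pigeonhole count: $|\mathcal{F}_d|=M$ is exactly matched to the number of actions, so the bookkeeping is tight. A secondary subtlety is verifying that the top-$M$ list for each $\sigma_d$ contains only agents with $b^{\cdot}_d = \top$; this follows because $K$ in Alg.~\ref{alg:reply} is chosen strictly larger than any realizable $t^{a_j}_d$, so any agent with $b^{\cdot}_d = \bot$ is pushed to the bottom of the sorted list, below every genuinely capable agent (in particular below $a_j$).
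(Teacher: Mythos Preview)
Your argument is correct and in fact more careful than the paper's. The paper proves the theorem by a short contradiction: it writes the objective as $\sum_d \sum_j c^{a_j}_d \Delta^{a_j}_d$, assumes the two optima differ, and concludes that some $a_j \in \mathcal{N}\setminus\mathcal{N}_F$ with $c^{a_j}_d=1$ must contribute a strictly smaller $\Delta^{a_j}_d$ than the agent chosen for $\sigma_d$ in the filtered optimum, which forces $a_j\in\mathcal{N}_d\subseteq\mathcal{N}_F$ and yields a contradiction. Your route is a constructive exchange argument: you take an optimum over $\mathcal{N}$ and, via the pigeonhole count $|\mathcal{F}_d|=M$ versus $M-1$ remaining actions, repeatedly swap any outside agent for an idle member of $\mathcal{F}_d$ without increasing cost. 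The two approaches hinge on the same fact (top-$M$ selection per action dominates any outsider), but your version buys more: it explicitly exhibits a common optimizer, and as a byproduct it establishes feasibility of the filtered MIP, which the paper's contradiction argument tacitly assumes. The paper's proof is terser; yours is tighter and handles the edge case $b^{\cdot}_d=\top$ for the top-$M$ list, which the paper does not discuss.
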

\begin{proof}
    Let $d\in\{1,...,M\}$ be the set of actions as denoted in Proc. \ref{proc:filtering} and rewrite the objective function in \eqref{eq:mip-cost} for $\mathcal{R}=\mathcal{N}$ as $\sum^{|\mathcal{N}|}_{j=1}c^{a_j}_{1}\cdot \Delta^{a_j}_1 +\ldots + \sum^{|\mathcal{N}|}_{j=1}c^{a_j}_{M}\cdot \Delta^{a_j}_M$. For each term (action), define $\mathcal{N}_d$ as the set of $M$ agents with the smallest $\Delta^{a_j}_d$ and then define $\mathcal{N}_F = \bigcup_{d=1}^{M} \mathcal{N}_d$. Rewrite the filtered objective function as $\sum_{a_j\in\mathcal{N}_1}c^{a_j}_{1}\cdot \Delta^{a_j}_1 +\ldots + \sum_{a_j\in\mathcal{N}_M}c^{a_j}_{M}\cdot \Delta^{a_j}_M$. Let $\{c^{a_j}_d\}^{*}_{\mathcal{N}}$ and $\{c^{a_j}_d\}^{*}_{\mathcal{N}_F}$ be the optimal solutions of \eqref{eq:MIP} for $\mathcal{R}=\mathcal{N}$, and $\mathcal{R}=\mathcal{N}_F$ respectively. Assuming the solutions differ, there must exist an agent $a_j \in \mathcal{N} \setminus \mathcal{N}_F$ so that $c^{a_j}_d = 1$ in $\{c^{a_j}_d\}^{*}_{\mathcal{N}}$ would result in a smaller value for the objective function compared to any agent $a_i \in \mathcal{N}_F$ where $c^{a_i}_d = 1$ in $\{c^{a_j}_d\}^{*}_{\mathcal{N}_F}$, implying $\sum^{|\mathcal{N}|}_{j=1}c^{a_j}_{d}\cdot \Delta^{a_j}_d<\sum_{a_j\in\mathcal{N}_d}c^{a_j}_{d}\cdot \Delta^{a_j}_d$.  Simplifying the inequality under the condition that $c^{a_j}_{d} = 1$ for only one $a_j$ leads to $\Delta^{a_j}_d < \Delta^{a_i}_d$. Since $a_j$ would then appear in $\mathcal{N}_d$, it must belong to $\mathcal{N}_F$ by Proc. \ref{proc:filtering} which contradicts the assumption. Hence, the optimal solutions are identical.
\end{proof}
Based on the solution of \eqref{eq:MIP} for $\mathcal{R}=\mathcal{N}_F$, $\mathbf{Conf}^{a_i}_{a_j} \forall\ a_j \in \mathcal{N}$ is built as follows: if $a_j\in\mathcal{N}_F$ and \eqref{eq:MIP} is feasible, add $(\sigma_d, \pi_d, c^{a_j}_d, T^{a_j}_d)$ to $\mathbf{Conf}^{a_i}_{a_j}$, otherwise add $(\sigma_d, \pi_d, \bot, -1)$.
Lastly, $a_i$ sends $\mathbf{Conf}^{a_i}_{a_j}$ to all $a_j\in\mathcal{N}$.
\par As in \cite{meng_paper}, the focus is on loosely coupled MAS where collaborations are sporadic compared to the total actions. This permits us to make the following assumption.
\begin{assumption}\label{ass:loosely_coupled} \textit{(Loosely Coupled System)} There exists a finite time $\mathbf{T}>0$ such that for each agent $a_i\in\mathcal{N}$ and any collaborative action $\sigma^{a_i}_c$ requested at time $t_c>0$, \eqref{eq:MIP} for $\mathcal{R}=\mathcal{N}_F$ will attain a solution within $t_c + \mathbf{T}$ for $\sigma^{a_i}_c$.
\end{assumption}
Assumption \ref{ass:loosely_coupled} allows for collaboration delays if no immediate solution is found. A plan adaptation process repeats until a solution is achieved, ensuring eventual success, as discussed in Sec. \ref{sub:res-plan-adaptation}. This assumption is reasonable for the tasks defined by \eqref{eq:recurringLTL}. We next show that based on the availability of agents and the scarcity of collaborative actions, both task progress and assistance can be accomplished. 

\subsection{Plan Adaptation}\label{sub:res-plan-adaptation}

After agent $a_i$ sends the confirmation messages to all agents $a_j \in \mathcal{N}$, two scenarios arise.
1) If \eqref{eq:MIP} for $\mathcal{R}=\mathcal{N}_F$ has no solution, $\sigma^{a_i}_c$ cannot be fulfilled based on the current replies. Agent $a_i$ will delay $\sigma^{a_i}_c$ by adding a detour of duration $T_{delay}$ (design parameter), using the self-loop $\sigma_0 = \mathit{None}$, and reattempt the request-reply-confirmation (RRC) cycle after this delay, while all the other agents proceed as planned and discard the detours. By Asm. \ref{ass:loosely_coupled}, a solution will eventually be found even if the RRC cycle might be repeated multiple times.
2) If \eqref{eq:MIP} for $\mathcal{R}=\mathcal{N}_F$ has a solution, each assisting agent $a_j$ checks if it has been selected (i.e., $c^{a_j}_d = \top$ for any $\sigma_d$). If selected, the agent updates $\tau^{a_j}$ to include $D^{a_j}(\sigma_d)$ and sets $\overline{T}^{a_j} = \bot$. If not selected, the agent continues with its original plan. The requesting agent $a_i$ sets $\overline{T}^{a_i} = \bot$.
Note that for assisting agents, we will set $\overline{T}^{a_j} = \top$ at the end of the detour to ensure they continue their plans. For $a_i$, we will set $\overline{T}^{a_i} =\top$ after $\sigma^{a_i}_c$ is completed.
\subsection{Time Synchronization}\label{subsec:res-synchro}
Suppose \eqref{eq:MIP} for $\mathcal{R}=\mathcal{N}_F$ has a solution, establishing a collaboration between the requesting agent $a_i$ and some assisting agents $a_j\in\mathcal{N}$. To successfully complete it, according to Def. \ref{def:succesful-collab} it is necessary to synchronize the execution of the actions. Let $\mathcal{S} \subseteq \mathcal{N}$ denote the set of agents involved in this collaboration. Once $a_j \in \mathcal{S} \setminus a_i$ is ready to execute $\sigma_d$, it sends a $\mathbf{Ready}^{a_j}$ message to $a_i$. When $a_i$ is ready to execute $\sigma^{a_i}_c$ and has received all $\mathbf{Ready}^{a_j}$ messages, it sends a $\mathbf{Start}$ message to all $a_j$, ensuring all agents start simultaneously. The following Lemma underpins the synchronization process.
\begin{lemma}\label{lemma:collaboration}
    There exists a finite time $T_s \geq 0$ such that, if $t$ is the time when $\mathbf{Conf}^{a_i}_{a_j}$ is sent, then by $t + T_s$ $\mathbf{Start}$ will be sent, and the agents will begin their collaboration.
\end{lemma}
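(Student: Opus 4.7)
The plan is to bound $T_s$ by decomposing the interval from the instant $t$ at which $\mathbf{Conf}^{a_i}_{a_j}$ is sent up to the instant $\mathbf{Start}$ is dispatched into three finite pieces: the time each agent in $\mathcal{S}$ needs to reach its collaboration state, the transit time of the $\mathbf{Ready}^{a_j}$ messages back to $a_i$, and the transit time of the $\mathbf{Start}$ broadcast out to $\mathcal{S}$. Since each piece is finite, so will be their sum.

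First I would argue that every $a_j \in \mathcal{S} \setminus \{a_i\}$ reaches the state $\langle \pi_d, \sigma_d\rangle$ within a finite time. By the plan adaptation described in Sec.~\ref{sub:res-plan-adaptation}, the updated plan $\tau^{a_j}$ now contains the detour $D^{a_j}(\sigma_d)$ produced by Alg.~\ref{alg:reply}; this detour is the concatenation of two shortest paths in the finite graph $\mathcal{T}^{a_j}_{\mathcal{G}}$, and is hence a finite sequence of states, each transition of which has bounded nominal duration via $\mathrm{T}^{a_j}_{\mathcal{G}}$. Accounting for the bounded execution delays noted in Rmk.~\ref{rem:delays}, the elapsed time from $t$ until $a_j$ is ready is bounded by $T^{a_j}_d + \delta_j$ for some finite $\delta_j \ge 0$. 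A symmetric argument yields $T^{a_i}_c + \delta_i$ for $a_i$ reaching $\sigma^{a_i}_c$.

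Next I would assume the shared network has a finite worst-case message latency $\tau_{\mathrm{comm}}$. Then every $\mathbf{Ready}^{a_j}$ reaches $a_i$ within $\tau_{\mathrm{comm}}$ after it is emitted, and the subsequent $\mathbf{Start}$ broadcast incurs at most another $\tau_{\mathrm{comm}}$. Combining the three pieces, I would set
\[
T_s \triangleq \max_{a_j \in \mathcal{S}}\bigl(T^{a_j}_d + \delta_j\bigr) + 2\tau_{\mathrm{comm}},
\]
with the convention $T^{a_i}_d = T^{a_i}_c$. This provides the desired finite upper bound, and since $a_i$ waits for all $\mathbf{Ready}^{a_j}$ messages before broadcasting $\mathbf{Start}$, the simultaneity required by Def.~\ref{def:succesful-collab} is preserved.

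The main obstacle is really not technical but conceptual: the lemma only becomes meaningful once the standing assumptions of finite low-level transition times (so that each $\delta_j$ is finite) and bounded network latency (so that $\tau_{\mathrm{comm}}$ exists) are made explicit. Both are taken for granted in the ROS2 deployment targeted by the paper, but the proof should flag them, since without them even a single stalled transition or lost packet would break the bound.
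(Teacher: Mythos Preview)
Your proposal is correct and follows essentially the same approach as the paper's proof: bound the time for each agent in $\mathcal{S}$ to reach its target state by summing the finite nominal transition durations plus the finite delays from Rmk.~\ref{rem:delays} along the (finite) detour, then take the maximum over $\mathcal{S}$. The only substantive difference is that you additionally budget $2\tau_{\mathrm{comm}}$ for message latency, which the paper's proof leaves implicit (effectively treating communication as instantaneous); this makes your version slightly more careful, but the core argument is identical.
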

\begin{proof}
    Consider an agent $a \in \mathcal{S} \setminus a_i$ in state  $\pi^{a}_{\mathcal{G}, init}$ at time $t$. After completing the assistive action $\sigma_d$ , it transitions to  $\pi^{a}_{\mathcal{G}, fin} = \langle \pi_d, \sigma_d \rangle$  via a finite sequence $\rho_d \subset \rho^a$, where  $\rho^a$ is the action sequence associated with the plan $\tau^a$. Each action $ \sigma_i \in \rho_d $ has an effective duration $ T_{\sigma_i} + d_{\sigma_i} $, where $ T_{\sigma_i} = T^{a}_{\mathcal{G}}(\sigma_i) < \infty $ and $ 0\leq d_{\sigma_i}<\infty $ is the delay (Rmk. \ref{rem:delays}).
    The time before starting $ \sigma_d $ is $ T^{a}_S = \sum_{i=0}^{|\rho_d| - 2} (T_{\sigma_i} + d_{\sigma_i}) < \infty $. After $ T^{a}_S $, $ a $ starts $ \sigma_d $ and sends $ \mathbf{Ready}^{a} $. The same holds for $ a_i $.
    Let $ T_S = \max_a \{T^{a}_S\} < \infty $ for $ a \in \mathcal{S} $. At $ t + T_S $, all agents send $ \mathbf{Ready}^{a} $, and $ a_i $ sends $\mathbf{Start}$, synchronizing the agents.
\end{proof}

\subsection{Approach Summary}
\begin{theorem}
    Given the set of agents $\mathcal{N}$, let $a_i\in\mathcal{N}$ build the FTS $\mathcal{T}^{a_i}_{\mathcal{G}}$ \eqref{eq:agent-model} and get assigned a recurring LTL task $\varphi^{a_i}_r$ \eqref{eq:recurringLTL}. After constructing the initial plan as in Sec. \ref{subsec:res-planning} implement the  RRC cycle for any collaborative action as detailed in Alg. \ref{alg:request}, Alg. \ref{alg:reply}, and  Sec. \ref{subsec:res-confirm}.
    Then, apply the plan adaptation from Sec. \ref{sub:res-plan-adaptation}, followed by the synchronization method in Sec. \ref{subsec:res-synchro}. Under Asm. \ref{ass:loosely_coupled} this approach solves Problem \ref{problem:task}.
\end{theorem}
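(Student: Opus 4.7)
My plan is to prove the theorem by verifying that, for each agent $a_i \in \mathcal{N}$, the trace generated by the executed (and possibly adapted) plan $\tau^{a_i}$ satisfies the recurring task $\varphi^{a_i}_r$. Since the tasks are local to each agent and the overall property $\varphi^{a_i}_r$ is of the form $\varphi'_1 \wedge \square\lozenge\varphi'_2$, I would decompose the argument into three pieces: (i) the initial plan satisfies the task, (ii) every modification introduced by the online coordination preserves this satisfaction, and (iii) the scheme does not get stuck, i.e.\ time progresses and every collaborative action is eventually carried out correctly.

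First, from Sec.~\ref{subsec:res-planning} the offline planner outputs $\tau^{a_i}_{init}$ with $\mathit{trace}(\tau^{a_i}_{init})\models\varphi^{a_i}_r$, so the claim holds before any RRC cycle. Next, I would exploit the structural property noted right after~\eqref{eq:recurringLTL}: any word obtained from a satisfying word by inserting a finite detour between two consecutive states remains a satisfying word. Online adaptations do exactly this, as two cases show. In the failed-MIP branch (Sec.~\ref{sub:res-plan-adaptation}, case 1), the requesting agent $a_i$ appends a finite self-loop of duration $T_{delay}$ using $\sigma_0 = \mathit{None}$, which is a finite detour; other agents discard the proposed change, so their plans are untouched. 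In the successful-MIP branch, each selected assisting agent $a_j$ replaces a segment of its plan with $D^{a_j}(\sigma_d)$, which by construction in Alg.~\ref{alg:reply} is a finite path from $\pi^{a_j}_{\mathcal{G},init}$ to $\pi^{a_j}_{\mathcal{G},fin}$ passing through $\langle\pi_d,\sigma_d\rangle$, i.e.\ again a finite detour. Hence by the recurring-LTL detour property, $\mathit{trace}(\tau^{a_j})\models\varphi^{a_j}_r$ after every adaptation step.

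It then remains to show that each requested collaboration is eventually completed so that the plans do not stall. Here I would invoke Assumption~\ref{ass:loosely_coupled} together with Theorem~\ref{thm:MIP}: optimality is preserved under the filtering procedure, so solving the MIP on $\mathcal{N}_F$ loses nothing, and by the loosely coupled assumption a feasible confirmation is found within $\mathbf{T}$ for every $\sigma^{a_i}_c$ (possibly after finitely many RRC retries, each adding a bounded detour as above). Once confirmation is issued, Lemma~\ref{lemma:collaboration} guarantees that a $\mathbf{Start}$ message is dispatched within finite time $T_s$, so by Def.~\ref{def:succesful-collab} the collaboration is successful. Finally, the flags $\overline{T}^{a_j}$ and $\overline{T}^{a_i}$ are reset to $\top$ at the end of the detour and of $\sigma^{a_i}_c$, respectively, so the agents resume the remainder of their satisfying plans and the $\square\lozenge\varphi'_2$ part continues to be met at each recurrence.

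The step I expect to be the main obstacle is the book-keeping that links the two halves of the argument: proving that the infinite sequence of plan modifications, each itself a finite detour, still produces a single valid word for the recurring formula. The worry is a Zeno-like pathology in which either retries accumulate without bound around one collaborative action, or assistive detours nest inside one another preventing the $\lozenge\varphi'_2$ obligations from being met infinitely often. I would rule this out by noting that Assumption~\ref{ass:loosely_coupled} gives a uniform upper bound $\mathbf{T}$ on the wait per collaboration, Lemma~\ref{lemma:collaboration} gives a finite synchronization time $T_s$, and Alg.~\ref{alg:request} only triggers new requests within a finite horizon $H^{a_i}$; consequently every detour terminates in finite time, only finitely many detours are active at once, and the $\square\lozenge\varphi'_2$ progress condition is inherited from $\tau^{a_i}_{init}$. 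Chaining these bounds through the detour-invariance property yields $\mathit{trace}(\tau^{a_i})\models\varphi^{a_i}_r$ for every $a_i\in\mathcal{N}$, which is exactly Problem~\ref{problem:task}.
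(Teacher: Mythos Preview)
Your proposal is correct and follows essentially the same approach as the paper's proof: the initial plan satisfies $\varphi^{a_i}_r$ by construction, plan adaptations are finite detours permitted by the recurring-LTL property stated after~\eqref{eq:recurringLTL}, and Assumption~\ref{ass:loosely_coupled} together with Lemma~\ref{lemma:collaboration} guarantee that every collaboration is eventually and successfully completed. Your version is considerably more detailed than the paper's (which is a single short paragraph), in particular your explicit case split on failed versus successful MIP and your treatment of the Zeno-like concern go beyond what the paper spells out, but the underlying skeleton is the same.
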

\begin{proof}
Consider an agent $a_i \in \mathcal{N}$. The initial plan satisfies the task specification $\varphi^{a_i}_r$, as ensured by the planner (Sec. \ref{subsec:res-planning}). 
As plan adaptation is permitted by the definition in \eqref{eq:recurringLTL}, task satisfaction is guaranteed. Furthermore, under Asm. \ref{ass:loosely_coupled} all collaborations are established within a finite time.By Lemma \ref{lemma:collaboration}, every collaboration will be successfully completed (Def. \ref{def:succesful-collab}), as there exists a finite time $T_S \geq 0$ that ensures all actions in the collaboration start simultaneously at $t + T_S$. Therefore, $\varphi^{a_i}_r$ is satisfied. Since $a_i$ can be any agent, this holds for all $a_i \in \mathcal{N}$, ensuring that the proposed approach solves Problem \ref{problem:task}.
\end{proof}
\subsection{Complexity Reduction}\label{subsec:res-complexity}
Compared to \cite{meng_paper}, by using the ROI representation instead of grid partitioning, we reduce the size of $\mathcal{T}^{a_i}_{\mathcal{M}}$, which also reduces the size of $\mathcal{T}^{a_i}_{\mathcal{G}}$. Additionally, in Alg. \ref{alg:reply}, we replace the more complex PBA $\mathcal{A_P}$ with the simpler FTS $\mathcal{T}^{a_i}_{\mathcal{G}}$ to ensure task satisfaction, leveraging the properties given by \eqref{eq:recurringLTL} to maintain LTL compliance. This reduces our approach's overall complexity, particularly since the most expensive operations in Alg. \ref{alg:reply} are the two calls to \Dijkstra, which have a quadratic cost on the number of states. Lastly, the filtering procedure before solving the MIP ensures that only $M \leq |\mathcal{N}_F| \leq M^2$ agents are involved, offering substantial complexity reduction when $N \gg M$. Moreover, by Thm. \ref{thm:MIP}, if $M=1$ Proc. \ref{proc:filtering} returns the optimal agent, solving  the MIP \eqref{eq:MIP} directly under negligible computational cost.


\section{Experimental Results}\label{sec:experimental}
We conducted experiments, within the framework of the CANOPIES project \cite{canopies}, to show the advantages of our approach, most of which are based on the following system.
\subsection{System description}\label{subsec:exp-system}
We consider a workspace measuring $4.2\si{m} \times 5.2\si{m}$ (Fig. \ref{fig:workspace}), and two types of agents: Robotis Turtlebot3 Burger \cite{turtlebot} and Hebi Rosie with robotic arm \cite{rosie}.
\begin{figure}
    \centering
    \begin{minipage}{0.5\linewidth}
        \centering
        \vspace{0.14cm}
        \includegraphics[width=0.4\linewidth]{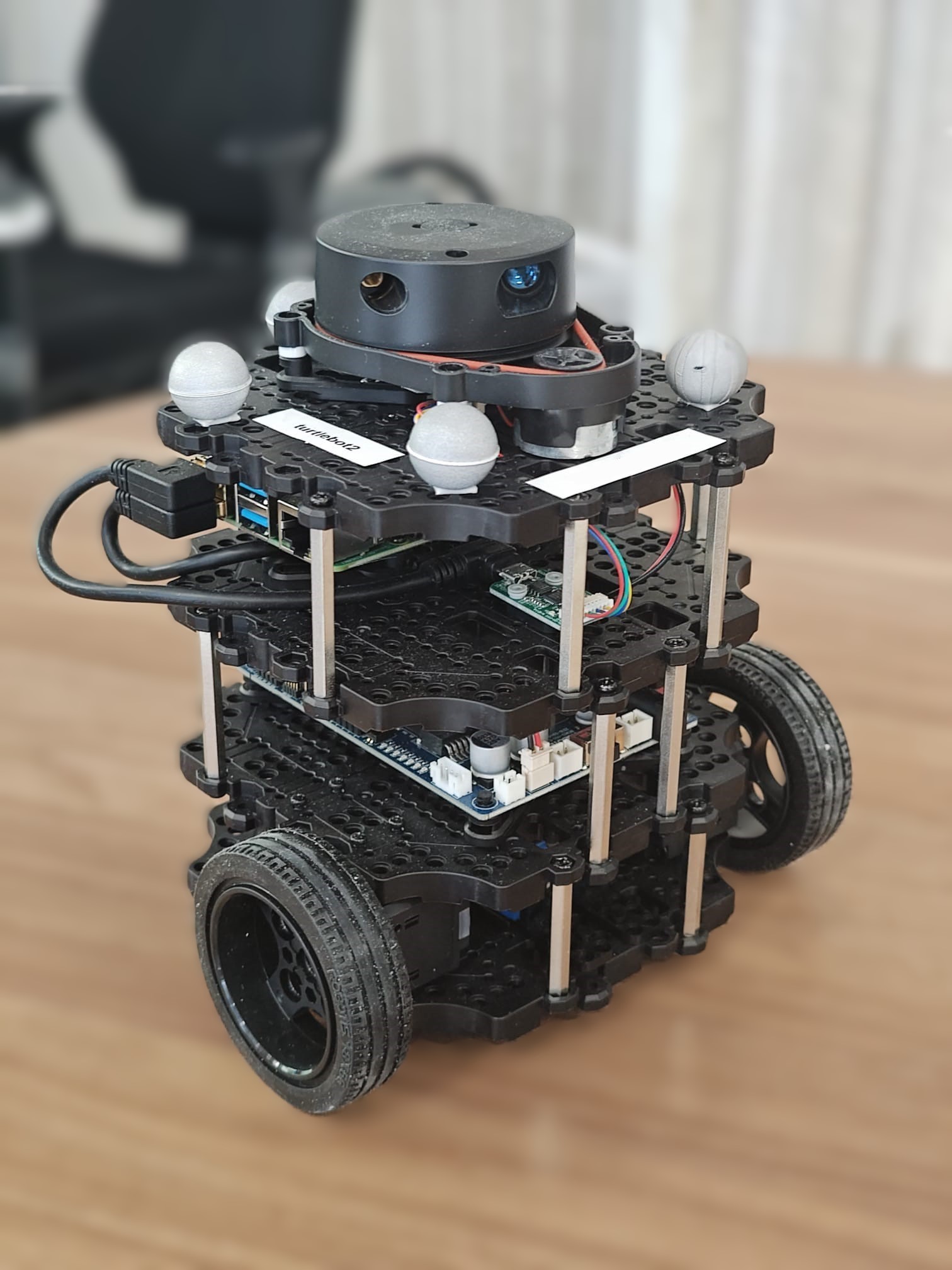}\\
        \vspace{0.05cm}
        \includegraphics[width=0.4\linewidth]{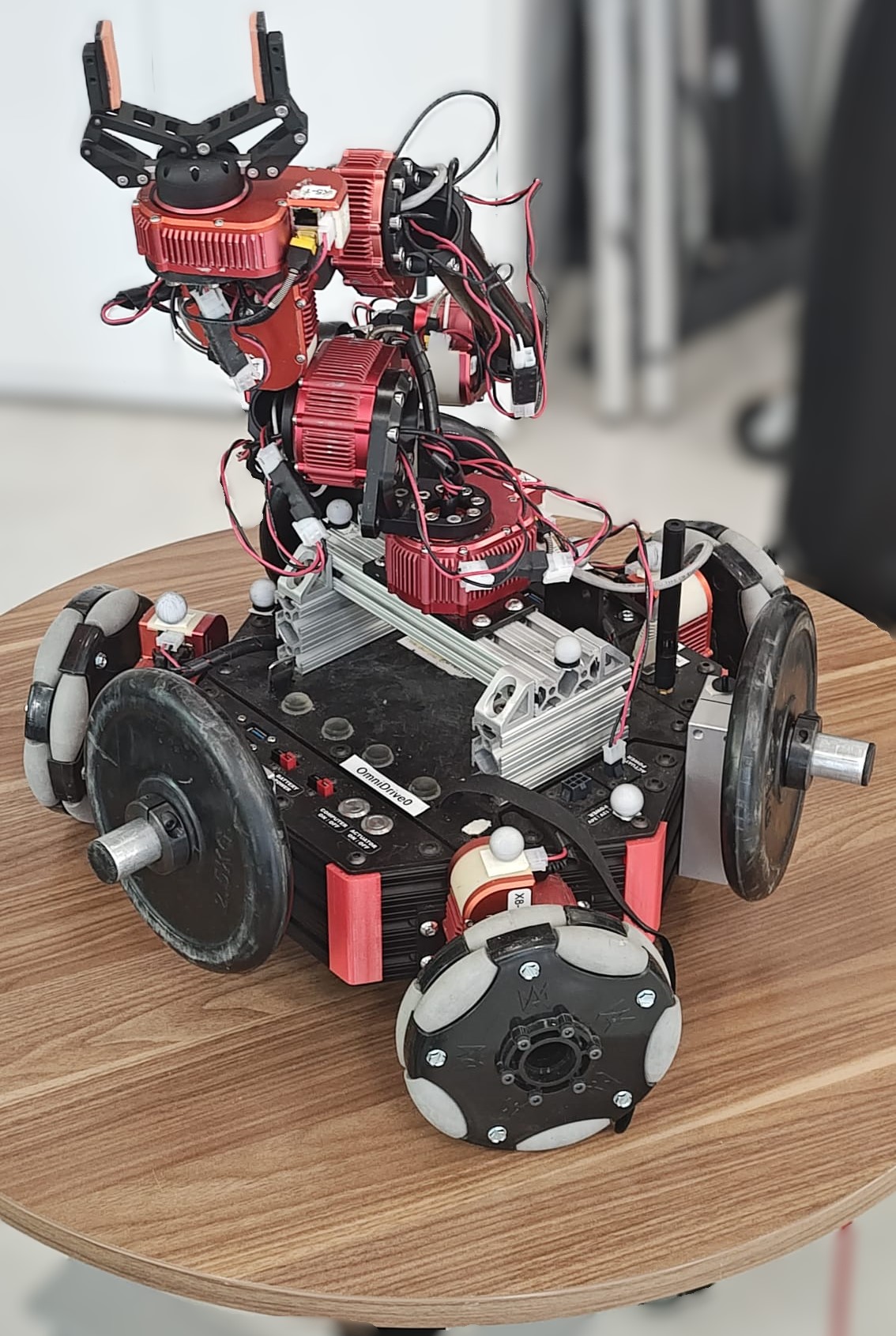}
         \subcaption{T. Turtlebot, B. Rosie}\label{fig:robots}
    \end{minipage}%
    \begin{minipage}{0.5\linewidth}
        \centering
        \includegraphics[width=\textwidth]{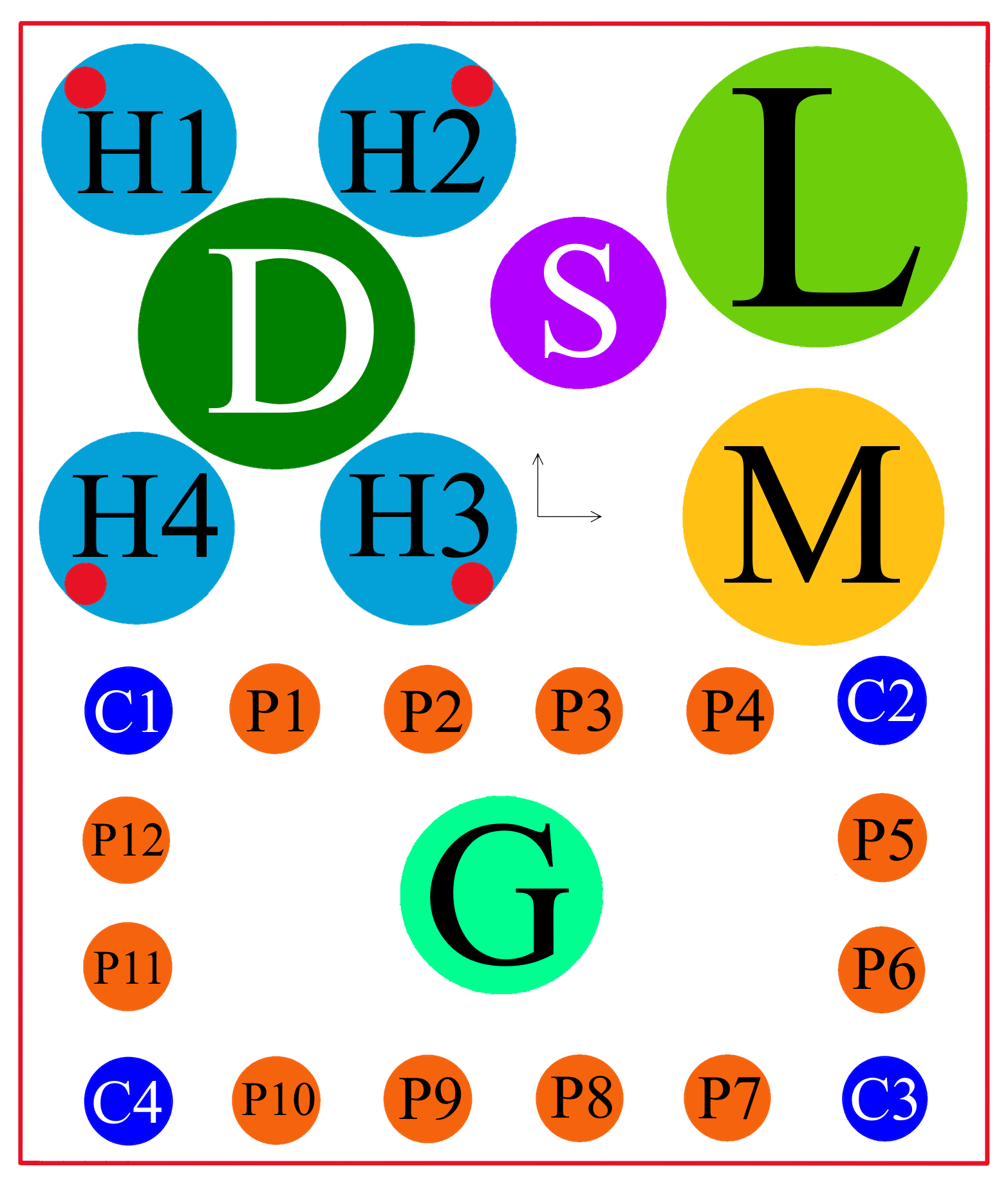}
         \subcaption{Workspace abstraction}\label{fig:workspace}
    \end{minipage}
    \caption{Experimental setup}
\end{figure}
\subsubsection{Turtlebot}
It knows the ROIs: $C1-4$, $P1-12$, $M$, and $G$. The collaborative actions are  \textit{check\_connection (cc)} at $C1$ or $C2$, \textit{group (g)} at $G$, \textit{remove\_object (ro)} at $M$, the assistive actions are \textit{help\_check\_connection (hcc)} at $C3$ or $C4$, \textit{help\_group (hg)} at $G$  and the local actions are \textit{patrol (p)} at $P1-12$, the movement related actions and, \textit{None}.
\subsubsection{Rosie}
It knows the ROIs: $H1-4$, $D$, $L$, $M$, and $S$. The collaborative action is \textit{load (l)} at $L$, the assistive actions are \textit{help\_load (hl)} at $L$, \textit{help\_remove\_object (hro)} at $M$ and, the local actions are \textit{harvest (h)} at $H1-4$, \textit{manipulate (m)} at $M$, \textit{deliver (d)} at $D$, \textit{supervise (s)} at $S$, the movement related actions and, \textit{None}.
\subsubsection{Task specification}\label{subsec:exp-task}
We consider a basic team composed of 3 Rosies and 6 Turtlebots. The recurring LTL tasks \eqref{eq:recurringLTL} assigned to the agents are as follows:     $\varphi^{\mathrm{rosie}_0}_r=\square\lozenge(h \wedge H1 \wedge \lozenge(h \wedge H3\wedge\lozenge d))$,    $\varphi^{\mathrm{rosie}_1}_r=\square\lozenge(s \wedge \lozenge(l \wedge\lozenge m))$,         $\varphi^{\mathrm{rosie}_2}_r=\square\lozenge(h \wedge H2 \wedge \lozenge(h \wedge H4\wedge\lozenge d))$,             $\varphi^{\mathrm{turtlebot}_0}_r=\square\lozenge(p \wedge P1 \wedge \lozenge(p\wedge P11))$,             $\varphi^{\mathrm{turtlebot}_1}_r=\square\lozenge(p \wedge P2 \wedge \lozenge(p\wedge P10))$,             $\varphi^{\mathrm{turtlebot}_2}_r=\square\lozenge(p \wedge P4 \wedge \lozenge(p \wedge P6))$,             $\varphi^{\mathrm{turtlebot}_3}_r=\square\lozenge(p \wedge P12 \wedge \lozenge(p\wedge P9\wedge \lozenge(cc \wedge C1))$,      $\varphi^{\mathrm{turtlebot}_4}_r=\square\lozenge(p \wedge P3 \wedge \lozenge(p\wedge P8\wedge \lozenge g))$ and, $\varphi^{\mathrm{turtlebot}_5}_r=\lozenge (ro \wedge \lozenge(p \wedge P5 \wedge \lozenge(cc \wedge C2))) \wedge\square\lozenge(p \wedge P5 \wedge \lozenge(p\wedge P7))$. The starting ROIs are respectively $H1$, $M$, $H2$, $P1$, $P2$, $P4$, $P12$, $P3$, $M$.
Lastly, we define \ChooseROI, which is relevant only when $check\_connection$ must be completed. If $cc$ is completed in $C1$, then $hcc$ is completed in $C3$. If $cc$ is completed in $C2$, then $hcc$ is completed in $C4$. 
\subsection{Complexity Reduction}\label{subsec:exp-complexity}
We assess the computational gains discussed in Sec. \ref{subsec:res-complexity}.
\subsubsection{Comparison between ROI and Grid Representation}
We developed an ROI representation that focuses only on the necessary regions for the agent unlike previous approaches \cite{meng_paper} which used a grid structure to partition the workspace. 
\begin{table}[b]
    \centering
    \begin{tabular}{c||c|c|c}
        \textbf{Agent} &  \textbf{States in $\mathcal{T}_{\mathcal{M}}$} &\textbf{States Grid} & \textbf{Reduction}\\
        \hline\hline
        Rosie &  $8$ & $42$ & $81.0\%$\\
        \hline
        Turtlebot&  $18$ & $500$ & $96.4\%$\\
        \hline\hline
    \end{tabular}
    \caption{Computational gains of $\mathcal{T}_{\mathcal{M}}$ over a grid structure}
    \label{tab:motion-state-reduction}
\end{table}
As shown in Tab. \ref{tab:motion-state-reduction}, the grid was partitioned with cells sized to contain the specific robot ($6\times7$ grid for Rosie and $20\times25$ for Turtlebots). 
The ROI representation led to a significant reduction in the number of states, achieving an average reduction of $88.7\%$ compared to the grid representation. This reduction influences the size of the FTS $\mathcal{T}_{\mathcal{G}}$ and the PBA $\mathcal{A_P}$. In the sequel, we focus on the ROI representation.

\subsubsection{Gains of using the FTS}
In Alg. \ref{alg:reply}, we use the FTS $\mathcal{T}_{\mathcal{G}}$, whereas \cite{meng_paper} uses the PBA $\mathcal{A_P}$. This results in an average reduction of $84.8\%$ in the number of states across all agents. The PBA's state count increases with task complexity, hence, $turtlebot_5$, with the most complex task, benefits the most from using $\mathcal{T}_{\mathcal{G}}$ in \Dijkstra. The results are in Tab. \ref{tab:state-reduction}.
\begin{table}[t]
    \centering
    \begin{tabular}{c||c|c|c}
        \textbf{Agent} &  \textbf{States $\mathcal{T}_{\mathcal{G}}$} &\textbf{States $\mathcal{A_P}$} & \textbf{Reduction}\\
        \hline\hline
        $\mathrm{rosie}_{0,1,2}$ &  $18$ & $162$ & $88.9\%$\\
        \hline
        $\mathrm{turtlebot}_{0,1,2}$ &  $37$ & $148$ & $75.0\%$\\
        \hline
        $\mathrm{turtlebot}_{3,4}$ &  $37$ & $333$ & $88.9\%$\\
        \hline
        $\mathrm{turtlebot}_5$ &  $37$ & $592$ & $93.8\%$\\
        \hline\hline
    \end{tabular}
    \caption{Computational gains by FTS against PBA}
    \label{tab:state-reduction}
\end{table}
\subsubsection{MIP Filtering}
Lastly, we analyzed the effect of the filtering procedure on the RRC cycle and on the confirmation step. This was tested in a centralized simulation, varying the number of agents and actions in a request. CycloneDDS \cite{cyclone} was used as the ROS2 middleware for stable communication between nodes.
As shown in Fig. \ref{fig:complexity} (bottom plot), the filtering procedure significantly reduced the confirmation time, especially as the number of agents increased, while the time remained relatively constant for different numbers of actions. The greatest reduction occurs with a single requested action, where the MIP is solved by Proc. \ref{proc:filtering}, with these results being barely visible in the plot due to their low values.
However, this reduction in confirmation time has a limited impact on the overall RRC cycle, as shown in Fig. \ref{fig:complexity} (upper plot), where the bottleneck is the ROS2 communication. Significant improvements only appear with 350 agents, the maximum allowed by the workstation. This suggests that with a larger number of agents ($N \gg M$), the filtering procedure yields substantial gains.
\begin{figure}[t]
    \centering    \includegraphics[width=\linewidth]{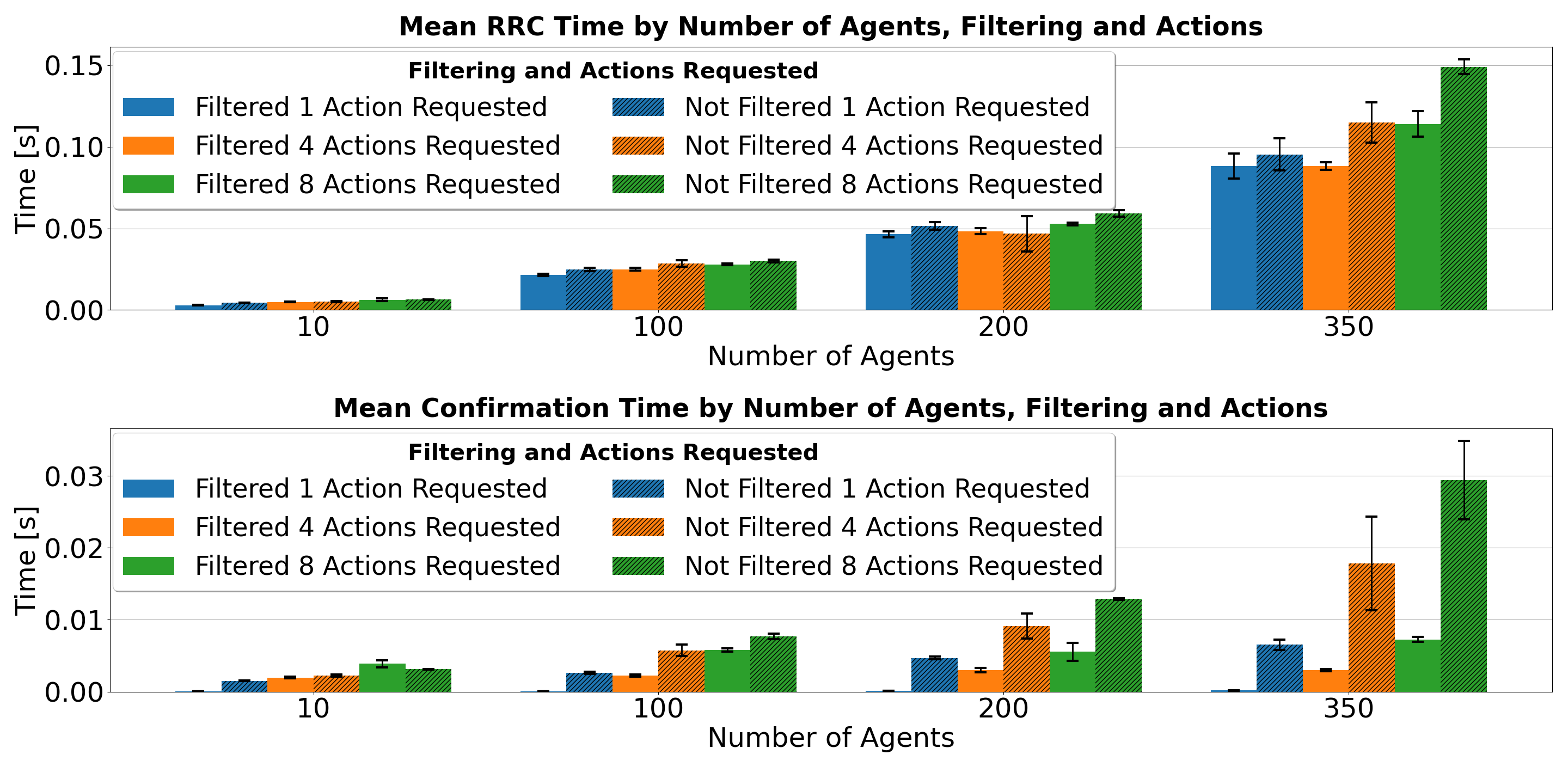}
    \caption{Effects of agents filtering}
    \label{fig:complexity}
\end{figure}

\subsection{Experimental Results}
\label{subsec:exp-results}
In this setup, we deployed our approach to the available hardware, in a decentralized way. For robot movement, we developed a model predictive controller \cite{mpc} with control barrier function \cite{cbf_1, cbf_2} constraints for collision avoidance. Agent poses were tracked using the Qualisys Motion Capture System \cite{mocap}. For the `$remove\_object$` action, we used a visual servoing controller \cite{visualservo1,visualservo2}, while all other actions were simulated. A video of the experiment is available at \cite{video}.
Fig. \ref{fig:experimental} shows the sequence of actions completed by each agent, demonstrating that all tasks were completed and that the approach synchronized collaborative and assistive actions to start simultaneously, even if some agents were ready earlier. Note that only non-movement actions are shown.
\begin{figure}[h]
    \centering    
    \includegraphics[width=\linewidth]{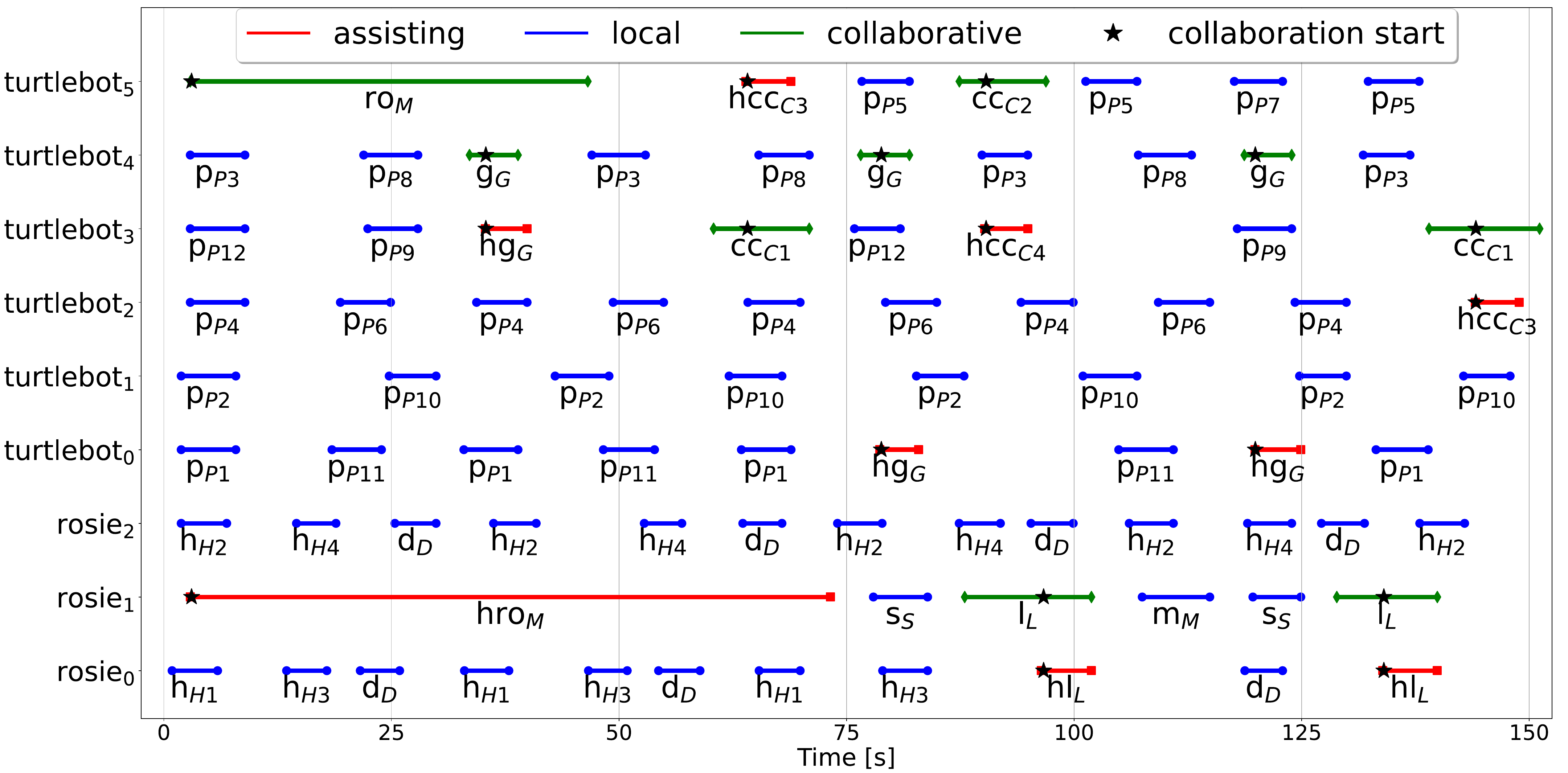}
    \caption{Agents actions in the experimental settings}
    \label{fig:experimental}
\end{figure}
\subsection{Scalability}
\label{subsec:exp-scalability}
To demonstrate the scalability of our approach, we created a simulation with 90 agents, representing 10 of the teams described in Sec. \ref{subsec:exp-system}, due to the unavailability of such a large number of robots. Fig. \ref{fig:scalability} shows the action sequence of a subset of these agents for clarity. The results indicate that the agents successfully completed their assigned tasks, collaborations, and synchronized actions. The successful completion of collaborations by the agents demonstrates the approach's effectiveness in performing well with ninety agents. Experiments with more agents could not be conducted due to RAM constraints. The results indicate that our approach scales well and can perform effectively with larger teams.
\begin{figure}[ht]
    \centering
    \includegraphics[width=\linewidth]{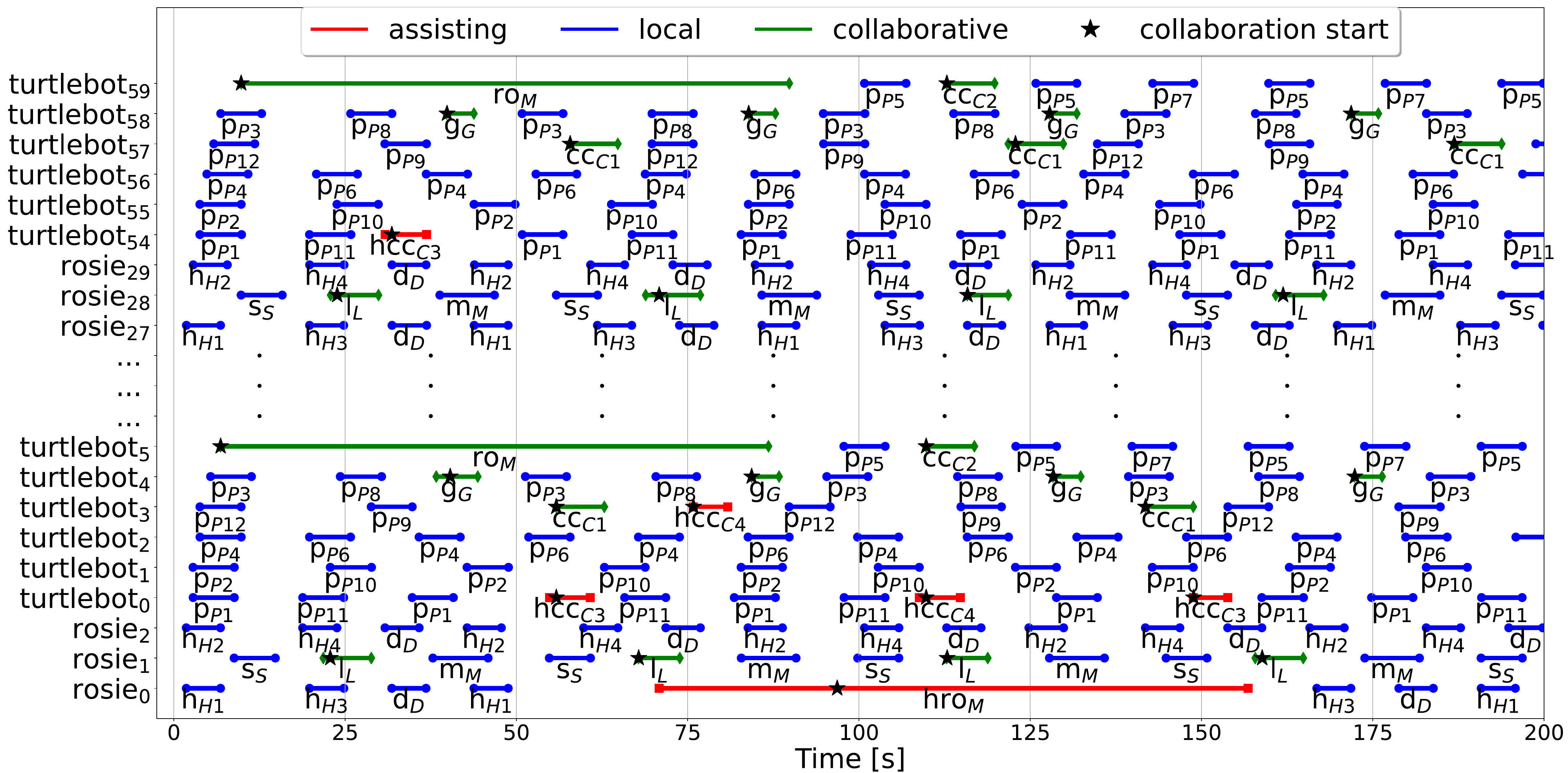}
    \caption{90 agents actions in the scalability simulation}
    \label{fig:scalability}
\end{figure}


\section{Conclusion}
\label{sec:conclusion}
This work focuses on MAS coordination and synchronization under recurring LTL. We extended the bottom-up scheme for distributed motion and task coordination of MAS in \cite{meng_paper}, reducing computational complexity to enhance scalability and enable deployment on robotic hardware. The package was developed in ROS2, with a synchronization mechanism to handle action delays in experiments. Future work will focus on developing additional actions and incorporating human-in-the-loop scenarios.

\newpage

\bibliography{biblio}
\bibliographystyle{IEEEtran}

\end{document}